\newtheorem{thm}{Theorem}
\newtheorem{lemma}[thm]{Lemma}
\newtheorem{definition}[thm]{Definition}
\newtheorem{theorem}[thm]{Theorem}
\newtheorem{example}[thm]{Example}
\newtheorem{proposition}[thm]{Proposition}
\DeclareMathOperator{\supp}{support}
\DeclareMathOperator{\support}{support}
\pgfplotsset{compat=1.5}
\begin{document}

\title{Efficient One Sided Kolmogorov Approximation}
\author{Liat Cohen$^1$  Tal Grinshpoun$^2$ Gera Weiss$^3$\\
        $^1$University of Basel
        $^2$Ariel University
		$^3$Ben-Gurion University of the Negev}
\maketitle

\begin{abstract}
We present an efficient algorithm that, given a discrete random variable $X$ and a number $m$, computes a random variable whose support is of size at most $m$ and whose Kolmogorov distance from $X$ is minimal, also for the one-sided Kolmogorov approximation. We present some variants of the algorithm, analyse their correctness and computational complexity, and present a detailed empirical evaluation that shows how they performs in practice. The main application that we examine, which is our motivation for this work, is estimation of the probability missing deadlines in series-parallel schedules. Since exact computation of these probabilities is NP-hard, we propose to use the algorithms described in this paper to obtain an approximation. 
\end{abstract}

\section{Introduction}

Various approaches for approximation of probability distributions are studied in the literature~\cite{PS77,AMCR83,vidyasagar2012metric,cohen2015estimating,pavlikov2016cvar,CohenGW18}. 
These approaches vary in the types random variables considered, how they are represented, and in the criteria used for evaluation of the quality of the approximations. In this paper we propose an approach for compressing the probability mass function of a random variable $X$ such that the errors added to queries such as $Pr(X\leq t)$, for  any $t>0$, is minimal. In other words, we minimise the Kolmogorov distance between the approximation and the original variable, see alternative definition in Equation~\eqref{eq:linear}. 

Our main motivation for this work is estimation of the probability for missing deadlines, as described, e.g., in Cohen et al.~\cite{cohen2015estimating,CohenGW18} and in~\cite{Kashef18}. Specifically, when $X$ represents the probability distribution of the time to complete some complex schedule and we cannot afford to maintain the full table of its probability mass function, we propose an algorithm for producing a smaller table, whose size can be specified, such that probabilities for missing deadlines are preserved as much as possible. 

The main contribution of this paper is an efficient algorithm for computing the best possible approximation of a given random variable with a random variable whose size is not above a prescribed threshold, where the measures of the quality of the approximation and of its size are as specified in the following two paragraphs.

We measure the quality of an approximation scheme by the distance between random variables and their approximations. Specifically, we use the Kolmogorov distance which is  commonly used for comparing random variables in statistical practice and literature. Given two random variables $X$ and $X'$ whose cumulative distribution functions (cdf) are $F_X$ and $F_{X'}$, respectively, the Kolmogorov distance between $X$ and $X'$ is $d_K(X,X')= \sup_t |F_X(t) - F_{X'}(t)|$ (see, e.g.,~\cite{gibbons2011nonparametric}). We say that $X'$ is a good approximation of $X$ if $d_K(X,X')$ is small. This distance is the basis for the often used Kolmogorov-Smirnoff test for comparing a sample to a distribution or two samples to each other. 

The size of a random variable is measured by the size of its support, the set of possible outcomes, $|X|{=}|\{x\colon Pr(X{=}x) \neq 0\}|$. When probability mass functions are maintained as tables, as done in many implementations of statistical software, the support size is proportional to the memory needed to store the variable and to the complexity of the computations that manipulate it. The exact notion of optimality of the approximation targeted in this paper is:

\begin{definition}
	A random variable $X'$ is an optimal $m$-approximation of a random variable $X$ if $|X'| \leq m$ and there is no random variable $X''$ such that $|X''| \leq m$ and $d_K(X,X'') < d_K(X,X')$.
\end{definition}

In these terms, the main contribution of the paper is an efficient (linear time and constant memory) algorithm that takes $X$ and $m$ as parameters and constructs an optimal $m$-approximation of $X$.

The rest of the paper is organised as follows. In Section we describe how our work relates to other algorithms and problems studied in the literature. In Section we detail the proposed algorithm, analyse its properties, and prove the main theorems. In Section we demonstrate how the proposed approach performs on the problem of estimating the probability of missing deadlines in series-parallel schedules on randomly generated random variables and compare it to alternative approximation approaches from the literature. The paper is concluded with a discussion and with ideas for future work in Section.

%%%%%%%%%%%%%%%%%%%%%%%%%%%%%%%%%%%%%%%%%%%%%%%%%%%%%%%%%%%%%%%%%%%%%%%%%%%%%%%%%%%%%%%%%%%%
\section{Related work}\label{sec:relwork}
The most relevant work related to this paper is the papers on approximations of random variables in the context of estimating deadlines~\cite{cohen2015estimating,CohenGW18}. In these papers, $X'$ is defined to be a good approximation of $X$ if $F_{X'}(t) > F_{X}(t)$ for any $t$ and $\sup_t F_{X'}(t) - F_{X}(t)$ is small. Note that this measure is not a proper distance measure because it is not symmetric. The motivation given in these papers for using this type of approximation is for cases where overestimation of the probability of missing a deadline is acceptable but underestimation is not. We consider in this paper the same case-studies examined by Cohen et al. and show how the algorithm proposed in this paper performs relative to the algorithms proposed there when both over- and under- estimations are allowed. As expected, the Kolmogorov distance between the approximated and the original random variable is considerably smaller when using the algorithm proposed in this paper. 

In the technical level, the problem we study in this paper is similar to the problem of approximating a set of 2-D points by a step function. The study of this problem was motivated by query optimisation and histogram constructions in database management systems~\cite{applf12,applf13,applf14,applf17,applf18,Fournier2011} and computational geometry~\cite{diaz2001fitting,fournier2008fitting}. There are, however, two technically significant differences between the problem studied in the context of databases and the problem we analyse in this paper. The first difference is that in the context of approximation of random variables, the step function (which is the cumulative distribution function in our context)  must end with a value one, since we are dealing with random variables which sums to one. The second difference is that the first step is not counted because there is no need to put a value in the support of the approximated random variable to generate this first step. These cannot be addressed by adding a constant (two) to $m$ because the first step is always present and because the requirement to end with the value one, restricts the set of eligible step functions. 

Another relevant prior work is the theory of Sparse Approximation (aka Sparse Representation) that deals with sparse solutions for systems of linear equations, as follows. 
Given a matrix $D \in \mathbb{R}^{n \times p}$ and a vector $x \in \mathbb{R}^n$, the most studied sparse representation problem is finding 
%the sparsest possible representation $\alpha \in \mathbb{R}^p$ satisfying $x = D\alpha$:
$$
\min_{\alpha \in \mathbb{R}^p} \|\alpha\|_0 \text{ subject to } x = D\alpha
$$
where $\|\alpha\|_0 = |\{ i \in [p]: \alpha_i \neq 0 \}|$ is the $\ell_0$ pseudo-norm, counting the number of non-zero coordinates of $\alpha$. This problem is known to be NP-hard with a reduction to NP-complete subset selection problems.
In these terms, using also the $\ell_\infty$ norm that represents the maximal coordinate and the $\ell_1$ norm that represents the sum of the coordinates, our problem can be phrased as:
\begin{equation}\label{eq:linear}
\min_{\alpha \in [0,\infty)^p}\|x - D\alpha\|_{\infty} \text{ subject to }  \|\alpha\|_0 = m \text{ and } \|\alpha\|_1=1
\end{equation}
where $D$ is the lower unitriangular matrix, $x$ is related to $X$ such that the $i$th coordinate of $x$ is $F_X(x_i)$ where $\support(X)=\{x_1 < \cdots < x_n\}$ and $\alpha$ is related to $X'$ such that the $i$th coordinate of $\alpha$ is $f_{X'}(x_i)$. The functions $F_X$ and $f_{X'}$ represent, respectively, the cumulative distribution function of $X$ and the mass distribution function of $X'$, i.e.,  the coordinates of $x$ are positive and monotonically increasing and its last coordinate is one. %We show that this specific sparse representation problem can be solved in $O(n^2m)$ time and $O(m^2)$ memory.

The presented work is also related to the research on binning in statistical inference. Consider, for example, the problem of credit scoring~\cite{zeng2017comparison} that deals with separating good applicants from bad applicants where the Kolmogorov–Smirnov statistic KS is a standard measure. The KS comparison is often preceded by a procedure called binning where small values in the probability mass function are moved to nearby values. There are many methods for binning~\cite{mays2001handbook,refaat2011credit,bolton2010logistic,siddiqi2012credit}.
In this context, our algorithm can be considered as a binning strategy that provides optimality guarantees with respect to the Kolmogorov distance.

Our study is also related to the work of Pavlikov and Uryasev~\cite{pavlikov2016cvar}, where a procedure for producing a random variable $X'$ that optimally approximates a random variable $X$ is presented. Their approximation scheme, achieved using linear programming, is designed for a different notion of distance called CVaR. The contribution of the present work in this context is that our method is direct, not using linear programming, thus allowing tighter analysis of time and memory complexities. Also, our method is designed for minimising the Kolmogorov distance that is more prevalent in applications. For comparison, in Section we briefly discuss the performance of linear programming approach similar to the one proposed in~\cite{pavlikov2016cvar} for the Kolmogorov distance and compare it our algorithm. 

A problem very similar to ours is termed ``order reduction'' by Vidyasagar in~\cite{vidyasagar2012metric}. There, the author defines an information-theoretic based distance between discrete random variables and studies the problem of finding a variable whose support is of size $m$ and its distance from $X$ is as small as possible (where $X$ and $m$ are given). The main difference between this and the problem studied in this paper, is that Vidyasagar examines a different notion of distance. Vidyasagar proves that computing the distance (that he considers) between two probability distributions, and computing the optimal reduced order approximation, are both NP-hard problems, because they can both be reduced to nonstandard bin-packing problems. He then develops efficient greedy approximation algorithms. In contrast, our study shows that there are efficient solutions to these problems when the Kolmogorov distance is considered.

\begin{definition}
	A random variable $X'$ is an optimal one-sided $m$-approximation of a random variable $X$ if $|X'| \leq m$ and $F_X(t)<F_{X'}(t)$ for all $t$ and there is no random variable $X''$ such that $|X''| \leq m$ and $F_X(t)<F_{X''}(t)$ for all $t$ and $d_K(X,X'') < d_K(X,X')$.
\end{definition}

\begin{algorithm}
	\DontPrintSemicolon
    $f \gets 1$\; %, $l \gets n$+1\; 
    \lWhile{$c_{f} \leq \varepsilon$}{$f \gets f + 1$}
	$S \gets \emptyset$, $s \gets 0$, $e \gets f$\;
	\While{$e<n+1$}{
	    \While{$c_{e+1}-c_{f} \leq \varepsilon \wedge e <n+1$}{$e \gets e+1$}
	    $S \gets S \cup \{(x_{f}, c_e) \}$\;
        $s \gets  c_e$, $f \gets e\gets e+1$\;
	}
	\If {$s<1$}{
	$S \gets S \cup \{(x_{f}, 1 - s) \}$\;}
    \Return{A r.v. $X'$ as CDF such that $Pr(X'{\leq}x)=c$ if there is $c$ such that $(x, c) \in S$.}

	\caption{$dual(\{(x_i, c_i)\}_{i=1}^n,\varepsilon)$  }   
	\label{alg:dual}
\end{algorithm}

\SetKwRepeat{Do}{do}{while}

\begin{example}
\label{exmpl:dual}
When $dual$ is invoked with the parameters  $X{=}\{(1, 0.3), (2, 0.7), (3, 0.9), (4,1)\}$ and $\varepsilon{=}0.1$: Line 1 $f=1$. After an iteration of the main loop (line 4), $S = \{(1,0.3)\}$, $s=0.3$, and $f=e=2$. After a second iteration, $S = \{(1,0.3), (2, 0.7)\}$, $s=0.7$, and $f=e=4$. At the end, $S = \{(1,0.3), (2, 0.7), (4, 1)\}$.
\end{example}

\begin{proposition}\label{the:correctnessDual}
	If  $p=\{(x_i, c_i)\}_{i=1}^n$ is such that $c_i=Pr(X \leq x_i)$ and $\supp(X)=\{x_1 < \cdots < x_n\}$ then  
    $$dual(\{(x_i, c_i)\}_{i=1}^n,\varepsilon) \in \underset{X'\in \bar{\mathcal{B}}(X,\varepsilon)}{\operatorname{arg\,min}}\, |X'|$$  where $\bar{\mathcal{B}}(X,\varepsilon)=\{X'\colon d_K(X,X')\leq \varepsilon \text{ and } F_X(x)<F_{X'}(x) \text{ for all } x\}$.
\end{proposition}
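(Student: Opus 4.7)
\textit{Proof plan.} I would establish two things about the output $X'$ of $dual$: (i) feasibility, $X'\in\bar{\mathcal{B}}(X,\varepsilon)$, and (ii) optimality, no $X''\in\bar{\mathcal{B}}(X,\varepsilon)$ has $|X''|<|X'|$.

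For (i) I would trace the algorithm's invariants. Let $f_1<\cdots<f_k$ be the successive values of $f$ at the top of the main loop and $e_l$ the value of $e$ at the end of the corresponding inner loop. By construction $f_{l+1}=e_l+1$, the support of the output is $\{x_{f_1},\ldots,x_{f_k}\}$, and $F_{X'}$ equals $c_{e_l}$ on $[x_{f_l},x_{f_{l+1}})$. The inner-loop termination gives $c_{e_l}-c_{f_l}\leq\varepsilon$ (and, unless $e_l=n$, also $c_{e_l+1}-c_{f_l}>\varepsilon$), while the monotonicity of $F_X$ yields $c_{f_l}\leq F_X(x)\leq c_{e_l}$ on the same interval. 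Combining, $0\leq F_{X'}(x)-F_X(x)\leq c_{e_l}-c_{f_l}\leq\varepsilon$, which gives both the one-sided dominance and the Kolmogorov bound on the main range. The initial skip and the trailing conditional addition are treated as boundary cases.

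For (ii) I plan an inductive exchange argument. Without loss of generality any competitor $X''\in\bar{\mathcal{B}}(X,\varepsilon)$ has its support contained in $\supp(X)$: an off-grid jump can be transported to the nearest $x_i$ without violating either dominance or the $\varepsilon$-band. Let $x_{g_1}<\cdots<x_{g_m}$ and $q_1<\cdots<q_m=1$ be the support and cumulative values of $X''$. I would show by induction on $l$ that $e_l\geq g_{l+1}-1$: the algorithm's $l$-th plateau covers at least as many support points of $X$ as the competitor's. The base case follows from the greedy maximality of $e_1$ combined with the feasibility constraint $c_{g_2-1}\leq q_1\leq c_{g_1}+\varepsilon$ imposed on the competitor's first plateau. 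The inductive step reuses the same logic: by the induction hypothesis the algorithm is at least as far along after $l-1$ plateaus, and the greedy inner loop picks $e_l$ maximally subject to $c_{e_l}-c_{f_l}\leq\varepsilon$. Consequently $m\geq k=|X'|$.

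The main obstacle I anticipate is the clean handling of edge cases: the WLOG transport of off-grid competitor jumps requires checking that migration preserves both the dominance and the band; the strict inequality $F_X(x)<F_{X'}(x)$ in the definition of $\bar{\mathcal{B}}$ at $x=x_n$ (where both CDFs reach $1$) must be interpreted on the appropriate domain; the initial skip (when $c_1\leq\varepsilon$, so $f_1>1$) requires arguing that the leftmost region $x<x_{f_1}$ is consistent with feasibility; and the final residual addition $S\gets S\cup\{(x_f,1-s)\}$ must be accounted for symmetrically in both halves of the proof. Once these boundary cases are settled the bucket-by-bucket feasibility check and the greedy induction should combine into a clean argument.
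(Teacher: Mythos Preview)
Your proposal is correct and follows essentially the same greedy-stays-ahead strategy as the paper: the level sets of $F_{X'}$ are each made as long as the $\varepsilon$-band permits, so no competitor can use fewer plateaus. Your version is considerably more detailed than the paper's, which offers only a two-line sketch (first set maximal; subsequent sets cannot be extended) without the explicit induction, the WLOG reduction to on-grid competitors, or the boundary-case analysis you identify.
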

\begin{proof}
The number of level sets of the CDF of $X'$ is minimal: (1) The first set is of maximal length; (2) By construction, an extension of any of the other sets to the right will generate a random variable whose Kolmogorov distance from $X$ is bigger than $\varepsilon$. Thus, there is no random variable whose Kolmogorov distance from $X$ is smaller or equal to $\varepsilon$ and its support is smaller than $|X'|$. In addition, we keep the invariant $f \leq e$ and since $X$ is given as CDF, $c_f \leq c_e$ consistent throughout the algorithm.
\end{proof}

\begin{proposition}\label{the:complexityDual}
	$dual(\{(x_i, c_i)\}_{i=1}^n,\varepsilon)$ runs in time $O(n)$, using $O(n)$ memory.
\end{proposition}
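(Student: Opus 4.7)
The plan is to establish the time bound by a standard two-pointer (amortized) argument on the indices $f$ and $e$, and then to observe that the memory bound follows directly from the size of the output set $S$ together with $O(1)$ auxiliary variables.

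First I would handle the initial phase: Line 1 and the while loop on Line 2 together advance $f$ from $1$ until $c_f > \varepsilon$, doing $O(1)$ work per increment of $f$ and terminating since the sequence $c_1,\dots,c_n$ is monotonically increasing to $1 > \varepsilon$ (the case $\varepsilon \geq 1$ is trivial). The total time for this phase is therefore $O(f_0)$ where $f_0$ is the final value of $f$ after Line 2, in particular $O(n)$.

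The core of the argument is the main loop (Lines 4--8). The crucial invariant is that neither $f$ nor $e$ ever decreases: the inner while loop only increments $e$, the assignment on Line 8 sets $f \gets e$ and then increments $e$ once more, and the outer loop continues only while $e < n+1$. Consequently, across the entire execution of the main loop, the inner while loop body is entered at most $n$ times in total (once per increment of $e$), each iteration performing $O(1)$ work (one comparison, one addition, one table lookup into the input). Similarly, the outer loop iterates at most $n$ times, since each iteration strictly advances $e$ through the assignment on Line 8. Thus the aggregate cost of the main loop is $O(n)$. The post-loop conditional and final return contribute only $O(1)$ additional time (beyond constructing the return value from $S$, which is already accounted for).

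For memory, the set $S$ accumulates at most one pair per outer iteration, so $|S| \leq n$, and this dominates all other storage since the remaining variables $f, s, e$ occupy $O(1)$ space; the output random variable $X'$ is represented by $S$ and so uses $O(|S|) = O(n)$ memory. The main subtlety, and thus the main obstacle, is articulating the amortized nature of the nested loops carefully enough to justify that the inner while loop does not inflate the running time beyond linear; this is handled by the monotonicity invariant on $e$ described above, which bounds the total number of inner-loop iterations across the whole algorithm by $n$ rather than by the product of outer and inner iteration counts.
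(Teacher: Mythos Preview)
Your proposal is correct and follows essentially the same approach as the paper: both argue that the algorithm is a single pass because the index $e$ is incremented at most $n$ times overall (the paper's ``single pass since the counter $e$ is updated to $e+1$ at most $n$ times'' is exactly your amortized monotonicity argument), and both bound the memory by $|S| \leq n$. Your write-up is simply more detailed in spelling out the invariant and the termination of the initial phase.
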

\begin{proof}
The algorithm describes a single pass over $\{(x_i, c_i)\}_{i=1}^n$. Line 1 is easy to follow, and takes $O(n)$ in the worst case. Lines 3-7 also describe a single pass since the counter $e$ is updated to $e+1$ at most $n$ times. All together we get run-time complexity of $O(n)$. We are constructing the set $S$ which is of size $n$ in the worst case, therefore, memory complexity is $O(n)$. 
\end{proof}

\begin{algorithm}
	\DontPrintSemicolon
	Let $\{(x_i, c_i)\}_{i=1}^n$ be such that $c_i=Pr(X \leq x_i)$ and $\supp(X)=\{x_1 < \cdots < x_n\}$.\;
    $E \gets \emptyset$\; 
    \For{$i \gets 1$ \textbf{to} $n-1$}{
    	\For{$j \gets i+1$ \textbf{to} $n-1$}{
        		\If{$i=1$} {
                	$E \gets E \cup \{ c_j \}$
                }
        		$E \gets E \cup \{ c_j-c_i \}$
        }
        %$E \gets E \cup \{ 1 - c_i \}$
    }
    Let $e_1<\cdots<e_{n'}$ be such that $E=\{e_1,\dots,e_{n'}\}$ \;
    
    \For{$i \gets 1 \textbf{ to } 2$}{
        $l\gets 1$, $r \gets n'$, $k \gets 1$, $k' \gets 0$ \;
        \While{$k \neq k'$}{
                $k' \gets k$,$k \gets \lceil (l + r)/2 \rceil$\;
        		$m' \gets |dual(X,e_k)|$\;
        		\lIf{$m' < m$}{
        		    $l \gets k$
        		}
                \lIf{$m' > m$}{
        		    $r \gets k-1$
        		}
                \lIf{$m' = m$}{
        		    $r \gets k$
        		}
        }
        $m \gets m'$\;
    }
    \Return{$dual(X,e_k)$}

	\caption{$binsApprox(X,m)$}   
	\label{alg:naive}
\end{algorithm}

\begin{proposition}\label{the:correctnessBinsearch}
	    $binsApprox(X,m) \in \underset{|X'| \leq m }{\operatorname{arg\,min}}\, d_K(X,X')$
%     $binsApprox(X,m) \in \underset{X' \in \bar{\mathcal{B}}(X,m) }{\operatorname{arg\,min}}\, d_K(X,X')$ where $\bar{\mathcal{B}}(X,m)=\{X'\colon |X'| \leq m\}$.
\end{proposition}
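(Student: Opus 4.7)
Plan. My approach reduces the optimisation to a binary search over a finite set of candidate thresholds, using Proposition~\ref{the:correctnessDual} as the main tool. First, define $\varphi(\varepsilon) := |dual(X, \varepsilon)|$ and observe that by that proposition $\varphi(\varepsilon)$ equals the minimum support size over $\bar{\mathcal{B}}(X, \varepsilon)$, so $\varphi$ is non-increasing in $\varepsilon$ (since $\bar{\mathcal{B}}(X, \varepsilon_1) \subseteq \bar{\mathcal{B}}(X, \varepsilon_2)$ whenever $\varepsilon_1 \le \varepsilon_2$). Writing $\varepsilon^* := \min\{d_K(X, X') : |X'| \le m\}$ for the optimal attainable distance, a two-line argument gives $\varepsilon^* = \min\{\varepsilon : \varphi(\varepsilon) \le m\}$: if $|X'| \le m$ attains $d_K = \varepsilon$, then $\varphi(\varepsilon) \le |X'| \le m$; conversely, $\varphi(\varepsilon) \le m$ exhibits $dual(X, \varepsilon)$ as a valid $m$-approximation within distance $\varepsilon$. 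The task therefore reduces to locating some $e_k \in E$ with $e_k = \varepsilon^*$ and returning $dual(X, e_k)$.

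The crucial technical step is to show that every breakpoint of the step function $\varphi$ lies in $E$. The algorithm $dual$ makes all of its choices by comparing $\varepsilon$ against quantities of the form $c_f$ (from the initial while in Line~2) and $c_{e+1} - c_f$ (from the inner while of Line~5), so $\varphi$ can only change value at an $\varepsilon$ equal to one of these. The construction of $E$ collects exactly these candidate values, namely $\{c_j : 2 \le j \le n-1\} \cup \{c_j - c_i : 1 \le i < j \le n-1\}$; the omitted boundary cases ($j = n$, corresponding to $c_j = 1$) need not be searched because they correspond to $\varepsilon$ values for which $dual$ already returns a trivial single-point output, never improving on some $e_k$.

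Granted those two points, it remains to check that the double binary search indeed finds the smallest $e_k$ with $\varphi(e_k) \le m$. Each pass performs a standard logarithmic-time search on the ordered list $e_1 < \cdots < e_{n'}$, using the values $\varphi(e_k)$ returned by calls to $dual$. The first pass pins down the actually attainable support size $m^* \le m$ (which may be strictly less than the input $m$, because the image of $\varphi$ need not contain $m$), and the reassignment $m \gets m'$ then lets the second pass find the smallest $e_k$ with $\varphi(e_k) = m^*$, which is exactly $\varepsilon^*$. Applying Proposition~\ref{the:correctnessDual} one final time, $dual(X, e_k)$ is then a one-sided approximation with support size $\le m$ and distance $\varepsilon^*$, hence optimal.

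The main obstacle I anticipate is the careful case analysis behind the breakpoint claim for $E$ (in particular confirming that none of the omitted boundary values can become the unique minimiser once $n \ge 2$) together with the bookkeeping needed to certify that the precise update rules in the two binary-search passes -- and the necessity of exactly two passes rather than one -- converge to the correct $e_k$; the monotonicity and reduction arguments above are the conceptual heart of the proof but are otherwise routine.
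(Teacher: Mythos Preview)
Your approach is correct and reaches the same destination as the paper---namely, that the optimal threshold $\varepsilon^*$ belongs to the finite set $E$---but by a genuinely different route. The paper argues \emph{structurally}: assuming $\varepsilon^*\notin E$, it takes $X'=dual(X,\varepsilon^*)$, locates a point $x$ where $F_{X'}(x)-F_X(x)=\varepsilon^*$, pulls back to the nearest support point $x_-\in\support(X')$, and uses the fact that every CDF value of $X'$ produced by $dual$ equals some $F_X(t)$ to exhibit $\varepsilon^*$ as a difference $F_X(t)-F_X(x)\in E$, a contradiction. You argue \emph{operationally}: since $dual$'s control flow is governed solely by tests of the form $c_f\le\varepsilon$ and $c_{e+1}-c_f\le\varepsilon$, the map $\varphi(\varepsilon)=|dual(X,\varepsilon)|$ is a right-continuous step function whose jumps can occur only at these threshold values, so the leftmost $\varepsilon$ with $\varphi(\varepsilon)\le m$ must be one of them. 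Your route has the advantage of being tied directly to the code and of actually justifying the two binary-search passes (the paper simply asserts that lines 10--18 ``describe a binary search to find the smallest $e\in E$ such that $|dual(X,e)|\le m$'' without explaining why the reassignment $m\gets m'$ and second pass are needed when $m$ is not in the image of $\varphi$). The paper's route is a little more self-contained, in that it does not require tracing an execution. Both proofs are informal at the same two places: the reconciliation between the algorithm's $E$ (which omits the indices $j=1$ and $j=n$) and the full set of threshold values, and the tacit restriction to one-sided approximants $F_X\le F_{X'}$ even though the proposition as stated quantifies over all $X'$.
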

\begin{proof}
Let $E=\{ Pr(x_1 {<} X {\leq} x_2 ) \colon x_1,x_2 \in \support(X) \cup \{-\infty, \infty\}, x_1 < x_2  \}$
and let
$E'= \{ d_K(X,X') \colon  |X'|\leq m, F_X(x)
\leq F_{X'}(x) \text{ for all } x \}.$
Since lines 10-18 are describing a binary search to find the smallest $e\in E$ such that $|dual(X,e)|\leq m$, we only need to prove that $\min E' \in E$.
 %$E=\{ d_K(X,dual(X,e_m)) \colon  |dual(X,\varepsilon_m)|=m \wedge 1 \leq m\leq n \}$, denoted by $E'$. 

%$$E''=\{ d_K(X,dual(X,\varepsilon)) \colon  |dual(X,\varepsilon)| \leq m, \varepsilon > 0 \}$$

Let $\varepsilon = \min E'$ and assume by contradiction that exists $\varepsilon \notin E$. 
Let $X' = dual(X,\varepsilon)$. By definition, there is $x \in \support(X)$ in which $F_{X'}(x) - F_{X}(x) = \varepsilon$. Let $x_- = \max \{ x' \colon x' \leq x, x'\in \support(X')\}$. 
If $x \in \support(X')$ we have that $x_-=x$. Otherwise we have that $Pr(x_- < X' \leq x)=0$. In both cases we get that $F_{X'}(x_{-}) = F_{X'}(x)$. Plugging this in the previous inequality gives us that  
$F_{X'}(x_{-}) - F_{X}(x) =\varepsilon$. Since $X'$ is constructed by the $dual$ algorithm, $x_{-} \in \support(X)$ and $\exists t \in \support(X) \colon F_{X'}(x_-)=F_{X}(t)$. Therefore, $F_{X'}(x_{-}) - F_{X}(x) \in E$ in contradiction to  $\varepsilon \notin E$.

%To see that this is true, note that every element in $E'$ corresponds to a level set distance of $dual(X,\varepsilon_m)$ and $X$, because we seek a variable whose support is included in the original one, $X$. Line 6 in the algorithm adds all the distances from level sets of height zero, Line 8 adds the distances from level sets of height one, and Line 7 adds all the distances from all other possible level sets. Note that the distance is monotonic with the support size.
%$F_{X'}(x_{-})= F_{X}(x_{-})$, therefore .
\end{proof}

\begin{proposition}\label{the:complexityBinsearch}
	The $binsApprox(X,m)$ algorithm runs in time $O(n^2\log(n))$, using $O(n^2)$ memory where $n=|X|$.
\end{proposition}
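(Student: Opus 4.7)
The plan is to charge each phase of the algorithm separately and then add the costs.

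First, I would analyse the construction of $E$ in lines 2--8. The two nested \textbf{for} loops iterate at most $\binom{n}{2}$ times, and each iteration performs a constant number of arithmetic operations and set insertions, so these lines run in $O(n^2)$ time and add at most $O(n^2)$ elements to $E$. Consequently $|E| = O(n^2)$ and the memory used to store $E$ is $O(n^2)$. Sorting $E$ in line 9 to obtain $e_1 < \cdots < e_{n'}$ then costs $O(n^2 \log n)$ time via any standard comparison sort.

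Next, I would handle the outer \textbf{for} loop of lines 10--18. The outer loop runs exactly twice, so it contributes only a constant factor. Inside it, the \textbf{while} loop is a binary search on the sorted array $e_1,\dots,e_{n'}$ whose invariant $r-l$ roughly halves in each iteration; since $n' = O(n^2)$, it terminates after $O(\log n') = O(\log n)$ iterations. Each such iteration does $O(1)$ bookkeeping and one call $dual(X, e_k)$, which by Proposition~\ref{the:complexityDual} takes $O(n)$ time and $O(n)$ memory. Hence the binary search costs $O(n \log n)$ time and $O(n)$ auxiliary memory (reused across iterations).

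Adding the phases yields total running time $O(n^2) + O(n^2 \log n) + O(n \log n) = O(n^2 \log n)$ and total memory $O(n^2) + O(n) = O(n^2)$, which is the claimed bound. There is no real obstacle here; the only point that deserves a quick sanity check is that the binary search is performed over the already-computed sorted array of size $O(n^2)$ (so $O(\log n)$ probes suffice) rather than being re-evaluated over $E$ each time, and that each probe's call to $dual$ reuses the same $O(n)$ workspace rather than accumulating memory across iterations.
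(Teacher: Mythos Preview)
Your proposal is correct and follows essentially the same approach as the paper: you charge the three phases separately (building $E$ in $O(n^2)$, sorting it in $O(n^2\log n)$, and performing the twice-repeated binary search with $O(\log n)$ calls to $dual$ at $O(n)$ each), then sum to get $O(n^2\log n)$ time and $O(n^2)$ memory. The only cosmetic difference is that you add a remark about reusing the $O(n)$ workspace of $dual$, which the paper omits.
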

\begin{proof}
	In the first part of the algorithm, lines 2-8, we construct the set $E$ which takes $O(n^2)$ run-time. In the second part of the algorithm, line 9, we sort the set $E$ which takes $O(n^2\log(n^2)) = O(n^2\log(n))$  run-time. The third part of the algorithm, lines 10-19, describes a binary sort over the set $E$ where in each step of the sorting we run the $dual$ algorithm, which takes $O(n \log(n^2)) = O(n \log(n))$ run-time. We run this part twice then we get $O(2n \log(n))$. All together, the run-time complexity is $O(n^2+n^2\log(n)+2n \log(n)) = O(n^2\log(n))$ and the memory complexity is for storing the set $E$ which is $O(n^2)$.
\end{proof}

Towards an improved algorithm let us introduce the matrix $E=(e_{i,j})_{i,j=1}^\infty$ defined by:
	$$e_{i,j} = \begin{cases}
%	1-c_{n+1-j}   & \text{if } j \leq n \wedge  i = n ; \\
	c_i & \text{if } i \leq n \wedge  j = n + 1; \\
	c_i - c_{n+1-j}  & \text{if } i < n \wedge j \leq n \wedge i{+}j\geq n; \\
	0  & \text{if }  i{+}j < n; \\
	1  & \text{otherwise}.
	\end{cases}$$
Is is sorted??? I don't think so..

Towards an improved algorithm let us introduce the matrix $E=(e_{i,j})_{i,j=1}^\infty$ defined by:
	$$e_{i,j} = \begin{cases}
	1-c_{n+1-j}   & \text{if } j \leq n \wedge  i = n ; \\
	c_i & \text{if } i \leq n \wedge  j = n + 1; \\
	( c_i - c_{n+1-j})/2  & \text{if } i < n \wedge j \leq n \wedge i{+}j\geq n; \\
	0  & \text{if }  i{+}j < n; \\
	1  & \text{otherwise}.
	\end{cases}$$
Where $c_1,\dots,c_n$ are as in the first line of Algorithm~\ref{alg:naive}. It is easy to see that the set of values in this matrix are the elements of the set $E$ in Algorithm~\ref{alg:naive}. An additional useful fact is that $E$ is a sorted matrix:
\begin{lemma}\label{the:sortedMatrix}
	If $i\leq i'$ and $j\leq j'$ then $e_{i,j}\leq e_{i',j'}$.
\end{lemma}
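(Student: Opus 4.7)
The plan is to reduce the two-dimensional monotonicity to the two one-step inequalities $e_{i,j}\le e_{i+1,j}$ and $e_{i,j}\le e_{i,j+1}$, and then to verify each of them by a case analysis over the five regions in the definition of $E$. Since monotonicity in the two coordinates is entirely symmetric in structure (the definition is symmetric up to swapping the roles of $c_i$ and $1-c_{n+1-j}$), I would present the argument in full for, say, the row direction $e_{i,j}\le e_{i+1,j}$, and note that the column direction is analogous.

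First I would list once and for all the basic facts I will use: $0\le c_1\le c_2\le\cdots\le c_n=1$, so in particular $c_i-c_{n+1-j}\in[0,1]$ whenever $i+j\ge n$, and every entry of the matrix lies in $[0,1]$. With these in hand, the case analysis is driven by where $(i,j)$ and $(i+1,j)$ sit in the piecewise definition. The straightforward intra-region cases are: (a) both in the $0$-region (equal), (b) both in the $1$-region (equal), (c) both in the $(c_i-c_{n+1-j})/2$ region, where the inequality reduces to $c_i\le c_{i+1}$, and (d) both in the $c_i$ column ($j=n+1$, $i,i+1\le n$), again immediate from $c_i\le c_{i+1}$. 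The $(1-c_{n+1-j})$ row has $i=n$ fixed, so it does not appear in a row-monotonicity step.

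The genuinely substantive cases are the transitions between regions as $i$ increases by one:
\begin{itemize}
\item From $0$ to $(c_{i+1}-c_{n+1-j})/2$ (crossing the anti-diagonal $i+j=n-1\to i+j=n$): the target value is $\ge 0$ because $c_{i+1}\ge c_{n+1-j}$ when $i+1+j\ge n$.
\item From $(c_i-c_{n+1-j})/2$ to $1-c_{n+1-j}$ (crossing into the row $i+1=n$, $j\le n$): need $(c_i-c_{n+1-j})/2\le 1-c_{n+1-j}$, equivalently $c_i+c_{n+1-j}\le 2$, which follows from $c_k\le 1$.
\item From the last row to the $1$-region ($i=n\to i+1>n$ with $j\le n$): need $1-c_{n+1-j}\le 1$, i.e., $c_{n+1-j}\ge 0$.
\item From $c_i$ to $c_{i+1}$ in the column $j=n+1$, and from that column's last entry $c_n=1$ into the $1$-region: both immediate.
\item From $0$ directly into the $1$-region (only possible at the corner where $j>n+1$): trivial.
\end{itemize}
The column direction $e_{i,j}\le e_{i,j+1}$ is handled in parallel, with the roles of the first and second lines of the definition swapped; the key inequality there is $(c_i-c_{n-j})/2\le c_i$, i.e., $c_i+c_{n-j}\ge 0$, which again follows from $c_k\ge 0$.

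The main obstacle is not any single computation but rather the bookkeeping: making sure every pair of adjacent cells is covered exactly once and that the region memberships of $(i,j)$ and $(i+1,j)$ (resp.\ $(i,j+1)$) are mutually consistent at the boundaries $i+j=n$, $i=n$, and $j=n+1$. I would organize the write-up by tabulating which region $(i+1,j)$ lies in for each region of $(i,j)$, so that the checklist above is visibly exhaustive, and then the inequalities themselves all reduce either to $c_i\le c_{i+1}$ or to $c_k\in[0,1]$.
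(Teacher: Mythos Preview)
Your approach is essentially the paper's: reduce to one-step monotonicity in each coordinate and verify the pieces using $c_1\le\cdots\le c_n\in[0,1]$. Your case analysis is considerably more thorough than the paper's terse sketch, which explicitly treats only the interior of the central $(c_i-c_{n+1-j})/2$ block and the column-$n$-to-column-$(n{+}1)$ transition, merely asserting that the last row and column are sorted and that $0$ is the global minimum. One small slip worth fixing: at the anti-diagonal transition you claim $c_{i+1}\ge c_{n+1-j}$ follows from $(i{+}1)+j\ge n$, but that only gives $i{+}1\ge n{-}j$, one short of $i{+}1\ge n{+}1{-}j$; this is really an artifact of the boundary in the paper's definition (the middle case is taken at $i+j\ge n$ rather than $i+j>n$), and the paper's own proof makes the identical implicit assumption when it asserts $0\le c_i-c_{n+1-j}$ without restriction.
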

\begin{proof}
Since the $c_i$s are monotonically increasing, $\forall i,j$, $ 0\leq c_i - c_{n+1-j}\leq c_i - c_{n+1-(j+1)} $ and $ \forall i,j$, $0 \leq c_i - c_{n+1-j}\leq c_{i+1} - c_{n+1-j} $, moreover, 0 is the minimal value of $E$. The elements in the last row and the last column in $E$ are keeping the terms of sorted matrix. %It suffices to compare row $n$ with row $n-1$ and show that $1-c_{n+1-j}\geq ( c_{n-1} - c_{n+1-j})/2$. After some manipulation we get that $2-c_{n+1-j}\geq c_{n-1}$ which is true because $0 \leq c\leq 1$. 
It is suffice to compare column $n$ with only column $n+1$ and show that $c_i\geq ( c_{i} - c_{n+1-n})$. After some manipulation we get that $c_{1}\geq 0$ which is true.
\end{proof}

The fact that this matrix is sorted allows us to use the saddleback search algorithm listed as Algorithm~\ref{alg:saddleback}. The algorithm starts at the top right entry of the matrix $(e_{i,j})_{i=1..n,j=1..(n+1)}$ and traverses it as follows. If it hits an entry $e$ such that $|dual(X,e)| \leq m$ it goes left, otherwise it goes down. This assures that the minimal $e$ such that $|dual(X,e)| \leq m$ is visited after at most $n+1$ steps. The optimal random variable is found by brute-force search over the visited entries.

\begin{algorithm}
	\DontPrintSemicolon
	Let $\{(x_i, c_i)\}_{i=1}^n$ be such that $c_i=Pr(X \leq x_i)$ and $\supp(X)=\{x_1 < \cdots < x_n\}$.\;
    $i \gets 1$, $j \gets n$+1, $S \gets \emptyset$\;

	\While{$i < n \wedge j \geq 1$}
	{	
		$m' \gets |dual(X,e_{i,j})|$\;
	
		\lIf {$m'  \leq m$}
		{
			$j \gets j - 1$
		}
		\lIf {$m' > m$} {
			$i \gets i + 1$
		}
		\lIf {$m' \geq m$} {
			$S \gets S \cup (m',e_{i,j})$			
		}

	}
	$e \gets \min\{e\colon (m',e) \in \underset{(m',e)\in S}{\operatorname{arg\,max}}\, m'\}$\;
	\Return{$dual(X,e)$}
	
	\caption{$sdlbkApprox(X,m)$}   
	\label{alg:saddleback}
\end{algorithm}

\begin{proposition}\label{the:correctnessSaddleback}
	   $sdlbkApprox(X,m) {\in} \underset{|X'| \leq m }{\operatorname{arg\,min}}\, d_K(X,X')$
%     $sdlbkApprox(X,m) {\in} \underset{X' \in \bar{\mathcal{B}}(X,m) }{\operatorname{arg\,min}}\, d_K(X,X')$ where $\bar{\mathcal{B}}(X,m)=\{X'\colon |X'| \leq m\}$.
\end{proposition}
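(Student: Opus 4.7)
The plan is to reduce the correctness of $sdlbkApprox$ to Proposition~\ref{the:correctnessBinsearch} by showing that the saddleback traversal identifies the same threshold that the binary search in $binsApprox$ would identify, namely the minimum $e \in E$ with $|dual(X,e)| \leq m$. Two ingredients do the work: (i) the set of values appearing as entries of the matrix $(e_{i,j})$ coincides (up to non-relevant padding entries equal to $0$ or $1$) with the set $E$ used in Algorithm~\ref{alg:naive}, which was already observed just before Lemma~\ref{the:sortedMatrix}; and (ii) Lemma~\ref{the:sortedMatrix} says that $(e_{i,j})$ is sorted. So by Proposition~\ref{the:correctnessBinsearch} it suffices to prove that, given the sorted matrix, saddleback search returns the minimum $e_{i,j}$ whose associated dual approximation fits within size $m$.

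The first step is to establish the monotonicity of $e \mapsto |dual(X,e)|$. Inspecting the construction of $dual$ (or invoking Proposition~\ref{the:correctnessDual} twice, for two thresholds $e \leq e'$), enlarging the allowed error can only merge level sets, hence $|dual(X,e')| \leq |dual(X,e)|$. Combined with sortedness of $(e_{i,j})$, this makes the predicate ``$|dual(X,e_{i,j})| \leq m$'' \emph{downward-closed} with respect to the partial order $(i,j) \preceq (i',j')$ iff $i \leq i',\, j \leq j'$; equivalently, the infeasible region (where the predicate fails) is upward-closed. Its boundary is therefore a staircase in the $(i,j)$ grid.

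The second step is to show that the traversal in lines 3--8 visits that staircase and, in particular, visits an entry attaining the minimum feasible threshold. Starting at the top-right corner $(1,n{+}1)$, the loop invariant is that every entry strictly above-and-right of the current $(i,j)$ is feasible and every entry strictly below-and-left is infeasible; hence either the current cell lies on the staircase or the search continues toward it. When $m' \leq m$ we move left (refining the feasible frontier of the current row); when $m' > m$ we move down (entering a row whose frontier may be further to the right). Since $i$ or $j$ strictly changes at each iteration, the traversal terminates after at most $2n$ steps having visited every staircase corner. Any minimizer of $\{e_{i,j} : |dual(X,e_{i,j})| \leq m\}$ sits at such a corner and is therefore examined.

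The last step is to verify that the selection rule in the closing two lines extracts this minimizer. The set $S$ collects exactly the visited entries with $m' \geq m$; these are precisely the staircase corners where either the size constraint is tight ($m'=m$) or the entry is still infeasible ($m'>m$). The rule $\arg\max m'$ combined with $\min e$ selects, among these, the entry with the smallest threshold having the largest dual-size still compatible with the staircase, which one checks is the minimum $e_{i,j}$ with $|dual(X,e_{i,j})| \leq m$; Proposition~\ref{the:correctnessBinsearch} then finishes the proof. The main obstacle I anticipate is this last bookkeeping step: pinning down the exact interplay between the $\geq m$ filter for $S$ and the $\arg\max/\min$ selection, to make sure that the returned entry is the minimum feasible threshold and not, e.g., an infeasible one adjacent to it; a careful case split on whether the row containing the minimizer ends with $m'=m$ or with a strict jump from $m'>m$ to $m'<m$ should resolve it.
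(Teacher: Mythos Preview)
Your overall strategy matches the paper's: the paper's own proof is a two-sentence sketch saying only that the loop ``traverses all the frontier between those $e$s that satisfy $|dual(X,e)|\le m$ and those that do not'' and that the brute-force over $S$ then yields the minimum. Your write-up is considerably more explicit---you supply the monotonicity of $e\mapsto|dual(X,e)|$, cast the feasible region as a down-set with a staircase boundary, and reduce to Proposition~\ref{the:correctnessBinsearch}---but the skeleton is the same.

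Where you differ from the paper is that you actually flag the delicate point the paper glosses over, namely the final selection rule. This is the right place to worry, and I do not think your sketch there closes the gap. Two concrete issues. First, in the algorithm as written the index update (line~5 or~6) happens \emph{before} the store on line~7, so the pair added to $S$ is $(m',e_{i,j})$ with $m'$ computed at the \emph{previous} cell and $e_{i,j}$ read at the \emph{new} cell; your description ``$S$ collects exactly the visited entries with $m'\ge m$'' silently assumes these refer to the same cell. Second, even granting that, your claimed reading of $\arg\max m'$ followed by $\min e$ is not obviously correct: $S$ can contain infeasible pairs with $m'>m$, and $\arg\max m'$ would prefer those over the tight feasible ones with $m'=m$; you need an argument that the $e$ ultimately returned satisfies $|dual(X,e)|\le m$, which your case split promises but does not deliver. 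The paper's proof does not address either point---it simply asserts ``the minimal satisfying $e$ is found''---so you are not missing an idea present in the paper; you are correctly identifying a place where the paper's argument is incomplete.
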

\begin{proof}
The algorithm traverses all the frontier between those $e$s that satisfy $|dual(X,e)| \leq m$ and those that do not satisfy it. Since all the entries that satisfy the condition are recorded in $S$ and considered in the brute-force phase in line 7, the minimal satisfying $e$ is found.
\end{proof}

\begin{proposition}\label{the:complexitySaddleback}
	The $sdlbkApprox(X,m)$ algorithm runs in time $O(n^2)$, using $O(n)$ memory where $n=|X|$.
\end{proposition}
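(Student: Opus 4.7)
The plan is to bound separately the number of iterations of the while loop at lines 3--6 and the cost of each iteration, and then argue that none of the auxiliary data structures exceeds $O(n)$ in size.

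First I would observe that in every iteration of the main loop, exactly one of two events happens: either $i$ is incremented (when $m' > m$) or $j$ is decremented (when $m' \leq m$). These two conditions are mutually exclusive and together cover all cases, so the ``step counter'' $i + (n + 2 - j)$ strictly increases by one per iteration. Since the loop terminates as soon as $i = n$ or $j = 0$, the number of iterations is at most $n + (n+1) = O(n)$. Inside each iteration, the entry $e_{i,j}$ is computable in $O(1)$ from the closed-form definition of the matrix $E$ (which need not be materialised), and the dominant cost is the call to $dual(X, e_{i,j})$, which takes $O(n)$ time by Proposition~\ref{the:complexityDual}. Multiplying yields the $O(n^2)$ time bound for the main loop; the post-processing at lines 7--8, namely scanning $S$ for the argmax/min pair and a single final call to $dual$, adds only $O(n)$.

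For the memory bound I would note three points. The matrix $E$ is never stored explicitly; only the current indices $i,j$ and the on-the-fly value $e_{i,j}$ are kept. Each invocation of $dual$ uses $O(n)$ memory by Proposition~\ref{the:complexityDual}, and this workspace can be reused between invocations. Finally, $S$ gains at most one element per iteration, so $|S| \leq O(n)$, and each element is a pair of scalars; hence $S$ occupies $O(n)$ memory. Summing gives the claimed $O(n)$ memory bound.

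The only subtle point, and the step I expect to require the most care, is justifying the $O(n)$ iteration bound rigorously. One must verify that the two branches triggering $i \gets i+1$ and $j \gets j-1$ are indeed mutually exclusive (so that at most one coordinate moves per step) and that the loop guard $i < n \wedge j \geq 1$ forces termination within $2n+1$ steps. Everything else is a direct application of the already-established complexity of $dual$ and the observation that the matrix $E$ is accessed only through $O(1)$-time point queries.
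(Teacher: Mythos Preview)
Your proposal is correct and follows essentially the same argument as the paper: bound the number of loop iterations by observing that each step either increments $i$ or decrements $j$ (hence at most $2n$ iterations), multiply by the $O(n)$ cost of each call to $dual$, and note that $|S|=O(n)$ for the memory bound. Your write-up is considerably more careful than the paper's three-sentence proof (you explicitly justify mutual exclusivity of the branches, treat the post-processing, and account for the non-materialised matrix and reusable $dual$ workspace), but the underlying approach is identical.
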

\begin{proof}
	At each step of the loop in lines 3-6, either $j$ decreases or $i$ increases, thus the loop can be executed at most $2n$ times. Since we execute $dual$ once in a loop round, the total time complexity is $O(n^2)$. Storing visited states in $S$ on line 6 requires  $O(n)$ memory.
\end{proof}

The problem with the saddleback algorithm is that it needs to run $dual$ at every step so it has a quadratic time complexity. Since we cannot find the required entry of the matrix in less than $n$ steps, we can only reduce the complexity by proposing an algorithm that does not execute $dual$ in all the steps, only in $\log(n)$ of them. Such an algorithm, based on Section~2.1 of~\cite{Fournier2011}, is listed as Algorithm~\ref{alg:linear}. The algorithm maintains a set $S$ of sub-matrices of $(e_{i,j})_{i=1..2^{\lceil \log_2(n) \rceil},j=1..2^{\lceil \log_2(n+1) \rceil}}$. At each round of its execution, each sub-matrix is split to four and then about three quarters of the matrix are discarded. At the end, at most four scalar matrices remain containing the index of the entry we seek. Note that this algorithm runs on a matrix that, in the worst case, can be almost four times bigger than the matrix traversed by the saddleback algorithm. This, of course, does not affect the asymptotic  complexity, but it may matter when dealing with relatively small random variables.

\begin{algorithm}
		\DontPrintSemicolon
		Let $\{(x_i, c_i)\}_{i=1}^n$ be such that $c_i=Pr(X \leq x_i)$ and $\supp(X)=\{x_1 < \cdots < x_n\}$.\;

		$S \gets  \{ ((1, 1),
		( 2^{\lceil \log_2(n) \rceil},2^{\lceil \log_2(n+1) \rceil})) \}$, $S' \gets \emptyset$\;

		\While{$S \neq S'$}
		{
			$S' \gets S$,
			$S \gets  \bigcup_{s \in S} split(s)$\;
			
			$e^- {\gets} median(\{ e_{i_1,j_1} \colon ((i_1,j_1),(i_2,j_2)) \in S \}) $\;		
			$e^+ {\gets} median(\{ e_{i_2,j_2} \colon ((i_1,j_1),(i_2,j_2)) \in S \}) $\;			

			\For{$ e \in \{ e^-,e^+ \}$} {
				$m' \gets |dual(X,e)|$\;
	
				\If {$m' \leq m$} {
					$S \gets$ $S \setminus \{ ((i_1,j_1),(i_2,j_2)) \in S \colon e_{i_1,j_1} > e \}$ \;
				}
			
				\Else %\If {$m'  > m$}
				{
					$S \gets$ $S \setminus \{ ((i_1,j_1),(i_2,j_2)) \in S \colon e_{i_2,j_2} \leq e \}$ \;
				}
			}
		}
	
		$S'' \gets \{(|dual(X,e_{i,j})|,e_{i,j}) \colon ((i,j),(i,j)) \in S\}$\;	
	
		$e \gets \min\{e\colon (m',e) \in \underset{(m',e)\in S''}{\operatorname{arg\,max}}\, m'\}$\;
		\Return{$dual(X,e)$}

		\caption{$linApprox(X,m)$}   
		\label{alg:linear}
	\end{algorithm}
{
	\begin{function}
		\DontPrintSemicolon
		
		$j^- \gets \lfloor (j_1+j_2)/2 \rfloor$,
		$j^+ \gets \lceil (j_1+j_2)/2 \rceil$\;
		$i^- \gets \lfloor (i_1+i_2)/2 \rfloor$,
		$i^+ \gets \lceil (i_1+i_2)/2 \rceil$\;

		\Return{\hspace{4cm}$
			\{ ((i_1, j_1), (i^-, j^-)),
			((i_1, j^+), (i^-, j_2))$,
			$((i^+, j_1), (i_2, j^-)),
			((i^+, j^+), (i_2, j_2))    \}$}
		
		\caption{{$split(((i_1,j_1),(i_2,j_2)))$}()}
		\label{proc:split}
	\end{function}
}

\begin{theorem}\label{the:correctnessLinear}
	   
   $linApprox(X,m) {\in} \underset{|X'|\leq m}{\operatorname{arg\,min}}\, d_K(X,X')$
   
    % $linApprox(X,m) {\in} \underset{X' \in \bar{\mathcal{B}}(X,m) }{\operatorname{arg\,min}}\, d_K(X,X')$ where $\bar{\mathcal{B}}(X,m)=\{X'\colon |X'| \leq m\}$.
\end{theorem}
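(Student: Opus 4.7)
The plan is to mirror the argument used for $binsApprox$ in Proposition~\ref{the:correctnessBinsearch}, showing that $linApprox$ correctly locates
\[
\varepsilon^\ast \;=\; \min\{e \in E : |dual(X,e)| \leq m\},
\]
whose value equals the optimal Kolmogorov distance. The first ingredient I would record is that $|dual(X,\cdot)|$ is non-increasing in its second argument, an immediate consequence of Proposition~\ref{the:correctnessDual}: enlarging $\varepsilon$ enlarges $\bar{\mathcal{B}}(X,\varepsilon)$, so the minimum attainable support size can only decrease. This monotonicity is what makes the two discard rules on lines 9 and 11 sound.

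The heart of the proof is a loop invariant for the while loop on lines 3--13: at the start of every iteration there is a sub-matrix $((i_1,j_1),(i_2,j_2))\in S$ whose window contains $\varepsilon^\ast$, meaning $e_{i_1,j_1}\leq\varepsilon^\ast\leq e_{i_2,j_2}$ with $\varepsilon^\ast=e_{i,j}$ for some indices in that window. The base case is immediate, because the initial sub-matrix spans the entire extended $2^{\lceil\log_2 n\rceil}\times 2^{\lceil\log_2(n+1)\rceil}$ matrix and hence contains every entry of $E$. For the inductive step, $split$ partitions a sub-matrix into four pieces whose entries tile those of the original (using Lemma~\ref{the:sortedMatrix}), so $\varepsilon^\ast$ lies in one of them; and both discard rules preserve the invariant, because if $|dual(X,e)|\leq m$ then monotonicity together with the minimality of $\varepsilon^\ast$ forces $e\geq\varepsilon^\ast$, hence removing sub-matrices with \emph{minimum} entry $e_{i_1,j_1}>e$ cannot discard the piece housing $\varepsilon^\ast$; symmetrically, if $|dual(X,e)|>m$ then $e<\varepsilon^\ast$, so removing sub-matrices with \emph{maximum} entry $e_{i_2,j_2}\leq e$ is likewise safe.

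It remains to analyse termination and the brute-force step. For termination, $split$ strictly refines any non-scalar sub-matrix, and whenever $S$ contains scalar entries with distinct values, the median-based discarding (applied to both $e^-$ and $e^+$) removes at least one of them. Hence the fixed-point condition $S=S'$ can only hold once $S$ consists of a small (constant-size) collection of scalar sub-matrices, following the template of Fournier~2011 on which this algorithm is modelled. By the invariant, $\varepsilon^\ast$ equals $e_{i,j}$ for one of these scalars, so $(m^\ast,\varepsilon^\ast)\in S''$ on line 14 with $m^\ast=|dual(X,\varepsilon^\ast)|\leq m$; the brute-force selection on line 15 then recovers $\varepsilon^\ast$, and the algorithm outputs $dual(X,\varepsilon^\ast)$, which by the argument used in the proof of Proposition~\ref{the:correctnessBinsearch} is an optimal $m$-approximation.

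The main obstacle I anticipate lies in verifying this last step rigorously: ruling out the possibility that $S$ at termination still contains a scalar $e_{i,j}<\varepsilon^\ast$ (necessarily with $|dual(X,e_{i,j})|>m$), since such an entry would hijack the $\underset{(m',e)\in S''}{\operatorname{arg\,max}}\,m'$ selection on line 15 and force an infeasible output. I would address this by a short case analysis of the terminating configurations of $S$, showing that any scalar below $\varepsilon^\ast$ would necessarily have been eliminated by one of the two median discards in the final iteration, contradicting $S=S'$.
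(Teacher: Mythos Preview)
Your approach is essentially the paper's: the paper's proof consists of exactly your two ``discard safety'' observations---line~10 never removes a sub-matrix whose minimum exceeds an $e$ we already know is acceptable, and line~12 never removes one whose maximum is at most an $e$ we know is unacceptable---and nothing else. Your loop-invariant formulation and the monotonicity lemma for $|dual(X,\cdot)|$ simply make these two sentences precise.

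Where you go beyond the paper is in worrying about termination and, especially, the final selection on line~15. The paper's proof does not address these at all; it tacitly assumes that once $\varepsilon^\ast$ survives to the end, the brute-force step extracts it. The obstacle you flag---a surviving scalar $e<\varepsilon^\ast$ with $|dual(X,e)|>m$ winning the $\operatorname{arg\,max}$ over $m'$---is a genuine gap in the paper's argument, not in yours. Your plan to close it via a case analysis of the terminating configuration is reasonable, but be aware that the paper offers no help here; you would be supplying rigor the authors omitted.
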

\begin{proof}
In line~10 we only discard sub-matrices whose minimal entry (at the top left) is larger than an entry that we prefer. In line~12 we only discard sub-matrices whose maximal entry (at the bottom right) is smaller or equal to an entry that does not meet our condition.
\end{proof}

\begin{theorem}\label{the:complexityLinear}
	The $linApprox(X,m)$ algorithm runs in time $O(n \log(n))$, using $O(1)$ memory where $n=|X|$.
\end{theorem}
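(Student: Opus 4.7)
The plan is to bound the running time as (number of outer-loop iterations) $\times$ (work per iteration), and to derive both the time and the memory bound from a single structural invariant $|S|=O(1)$. For the iteration count, observe that $split$ halves both dimensions of every sub-matrix in $S$, so after $r$ rounds every element of $S$ has dimensions $2^{\lceil\log_2 n\rceil - r}$ by $2^{\lceil\log_2(n+1)\rceil - r}$; the termination test $S=S'$ fires as soon as every sub-matrix is a singleton and no further pruning can occur, giving at most $\lceil\log_2 n\rceil + O(1)$ rounds.

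Per iteration, the dominant term is the two evaluations of $|dual(X,e)|$, each $O(n)$ by Proposition~\ref{the:complexityDual}. The $split$ loop, the two medians (via linear-time selection), and the two pruning filters each cost $O(|S|)$. Thus the per-round cost is $O(n+|S|)$, and the $O(n\log n)$ time bound follows once the invariant $|S|=O(1)$ is established. The $O(1)$ memory bound also falls out of the same invariant: $S$ is stored as a constant number of index quadruples, the entries $e_{i,j}$ are computed on demand from $c_1,\dots,c_n$ without materialising the matrix, and the intermediate $dual$ calls can be run as a streaming scan that returns only the cardinality $|dual(X,e)|$ via a single counter, since the main loop never consumes the approximating variable itself. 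The final call to $dual$ on line~15 does materialise the returned variable, but that is the output of the algorithm and is not charged against auxiliary memory.

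The heart of the argument, and the main obstacle, is the invariant $|S|=O(1)$. I would prove it by induction over rounds: assuming $|S|\leq C$ at the start of a round, $split$ quadruples this to at most $4C$, and the two median-pivot passes must drive it back to at most $C$. Using Lemma~\ref{the:sortedMatrix}, the sub-matrices that survive pruning hug the monotone staircase separating $\{e\colon|dual(X,e)|\leq m\}$ from its complement, so pivoting on the median of the top-left corners (respectively, bottom-right corners) removes the half of the list lying on the wrong side of the pivot --- namely the ``top-left $>e$'' half in the $|dual|\leq m$ branch and the ``bottom-right $\leq e$'' half in the $|dual|>m$ branch. The delicate point is that each individual pass is only guaranteed to halve the list along the coordinate on which the median was taken, so verifying that the \emph{combination} of a top-left median pass followed by a bottom-right median pass really discards three quarters of the $4C$ post-split sub-matrices requires a case split over the four possible outcomes of the two $dual$ tests, together with monotonicity of $|dual(X,\cdot)|$ in $\varepsilon$ to chain the two pivots consistently. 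This combinatorial accounting, paralleling Section~2.1 of~\cite{Fournier2011}, is what I expect to occupy most of the proof; everything else is bookkeeping.
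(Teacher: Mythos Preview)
Your proposal is correct and follows the same approach as the paper --- the Fournier-style argument that each round halves both dimensions via $split$ and then the two median-pivot passes discard about three quarters of the sub-matrices, so that $|S|=O(1)$ throughout and the loop runs $O(\log n)$ rounds at $O(n)$ apiece. In fact your treatment is considerably more thorough than the paper's own proof, which is a two-sentence sketch that asserts the $O(\log n)$ round count but neither states nor proves the $|S|=O(1)$ invariant and does not address the memory bound at all; your identification of that invariant as the crux, and your handling of the streaming $|dual|$ computation to justify $O(1)$ auxiliary memory, fill in exactly the gaps the paper leaves open.
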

\begin{proof}
The dimension of each matrix is halved in each round, thus the loop is executed $O(\log(n))$ rounds. Since $dual$ is called one in each round and a finite number of times at the end, the time complexity is $O(n \log(n))$.
\end{proof}

\bibliographystyle{plain}
\bibliography{library,Trim_Optimum}

\end{document}